  \providecommand\BibTeX{{%
    \normalfont B\kern-0.5em{\scshape i\kern-0.25em b}\kern-0.8em\TeX}}}
\begin{document}

\title{Multilevel Memetic Hypergraph Partitioning with Greedy Recombination}

\author{Utku Umur Acikalin}
\email{u.acikalin@etu.edu.tr}
\orcid{0000-0002-0381-8831}
\author{Bugra Caskurlu}
\orcid{0000-0002-4647-205X}
\authornotemark[1]
\email{b.caskurlu@etu.edu.tr}
\affiliation{%
  \institution{TOBB University of Economics and Technology}
  \city{Ankara}
  \country{TURKEY}
}

\renewcommand{\shortauthors}{Acikalin and Caskurlu}

\begin{abstract}
The Hypergraph Partitioning (HGP) problem is a well-studied problem that finds applications in a variety of domains. The literature on the HGP problem has heavily focused on developing fast heuristic approaches. In several application domains, such as the VLSI design and database migration planning, the quality of the solution is more of a concern than the running time of the algorithm. KaHyPar-E is the first multilevel memetic algorithm designed for the HGP problem and it returns better quality solutions, compared to the heuristic algorithms, if sufficient computation time is given. In this work, we introduce novel problem-specific recombination and mutation operators, and develop a new multilevel memetic algorithm by combining KaHyPar-E with these operators. The performance of our algorithm is compared with the state-of-the-art HGP algorithms on $150$ real-life instances taken from the benchmark datasets used in the literature. In the experiments, which would take $39,000$ hours in a single-core computer, each algorithm is given $2, 4$, and $8$ hours to compute a solution for each instance. Our algorithm outperforms all others and finds the best solutions in $112$, $115$, and $125$ instances in $2, 4$, and $8$ hours, respectively.

\end{abstract}

\begin{CCSXML}
	<ccs2012>
	<concept>
	<concept_id>10002950.10003624.10003633.10003637</concept_id>
	<concept_desc>Mathematics of computing~Hypergraphs</concept_desc>
	<concept_significance>500</concept_significance>
	</concept>
	<concept>
	<concept_id>10003752.10003809.10003716.10011136.10011797.10011799</concept_id>
	<concept_desc>Theory of computation~Evolutionary algorithms</concept_desc>
	<concept_significance>500</concept_significance>
	</concept>
	</ccs2012>
\end{CCSXML}

\ccsdesc[500]{Mathematics of computing~Hypergraphs}
\ccsdesc[500]{Mathematics of  computation~Evolutionary algorithms}

\keywords{multilevel hypergraph partitioning, memetic algorithms}


\maketitle

\section{Introduction}

Hypergraphs are generalizations of graphs such that each hyperedge (net) connects a subset of nodes. Hypergraphs are used to model complex relationships between nodes that cannot be captured by graphs. For instance, they can represent logic circuits containing gates with more than two inputs \cite{karypis1999multilevel}. A classical computational problem related to hypergraphs is the Hypergraph Partitioning (HGP) problem.
In this problem, we are given a Hypergraph $H$, an integer $k$, and a maximum imbalance ratio $\epsilon$, and asked to find a \textit{balanced} partition of nodes into $k$ disjoint blocks that minimizes a cost function defined over the hyperedges. A partition is called balanced if the size of each block is not more than $(1+\epsilon)$ times the average block size. The HGP problem has various applications in real life such as VLSI design \cite{karypis1999multilevel}, parallel matrix multiplication \cite{patoh}, database migration planning \cite{subramani2018minimization}, and database sharding \cite{socialhashpartitioner}. In parallel matrix multiplication, the HGP problem is used to accelerate the main computation, thus the HGP problem needs to be solved fast. For applications, such as VLSI design and database migration planning, not only that the HGP problem is solved offline, but also the quality of the solution is directly related to the cost of the operation. Ergo even small improvement in solution quality is critical \cite{wichlund1998multilevel} and translates into millions of dollars of reduction in operational costs. For instance, in application specific integrated circuit (ASIC) design, one can spend hours or days to solve the HPG problem since it will typically take weeks to create the final implementation \cite{schlag2021high}. In this paper, we target the applications of the HGP problem for which the quality of the solution is the main concern.

Most of the work on the HGP problem in the literature has focused on developing heuristic solutions since the HGP problem is NP-Hard\cite{garey1974some}. Almost all of the state-of-the-art algorithms developed for the HGP problem use the \textit{multilevel hypergraph partitioning} (or, shortly \textit{multilevel}) paradigm \cite{bulucc2016recent} that consists of the following three stages: \textit{coarsening}, \textit{initial partitioning}, and \textit{uncoarsening}. At the coarsening stage, nodes of the hypergraph are contracted at each level to obtain a series of smaller hypergraphs that are structurally similar to the original hypergraph. This stage ends when a hypergraph with the predetermined size is obtained. Then, at the initial partitioning stage, the coarsest (smallest) hypergraph is partitioned using heuristic algorithms. At each level of the uncoarsening stage, coarsening is reverted and the partition is refined using local search methods.

The efficiency of the local search algorithms decreases as the size of the hyperedges increases since the gain of moving a single node from one block to other is zero with high probability \cite{mann2014formula}. The multilevel approach allows more complex operations or more repetitions of algorithms in coarser levels without increasing the overall running time much \cite{bulucc2016recent}. Thus, the multilevel approach increases the performance of the local search algorithms since moving a single node in coarser levels actually corresponds to moving all of the nodes that contracted in that single node \cite{bulucc2016recent}. This way, local search algorithms have a broader view in coarser levels and a more granular view in the first levels \cite{andre2018memetic}.

Local search algorithms are still prone to get stuck in suboptimal solutions, and thus repeated executions of the multilevel HGP algorithms help to obtain better quality solutions. Typically, repeated executions are combined with methods that diversify the search \cite{andre2018memetic}. One commonly used approach for this is the use of \textit{V-cycles} \cite{Vcycle}. In the Multilevel HGP context, the V-cycle method uses the already computed partition and coarsens only the nodes in the same blocks. Since only the nodes in the same block are coarsened, the original partition is a valid partition, and thus the initial partitioning stage is skipped. The partition is improved at the uncoarsening stage. Using different random seeds, V-cycle method can be executed many times to improve the partition. Even though using repeated executions of V-cycles is more efficient than calculating new partitions from scratch, using more sophisticated metaheuristics is more efficient \cite{andre2018memetic}.

\citet{andre2018memetic} proposed the first multilevel memetic algorithm for the HGP problem that uses specifically tailored mutation and recombination operators. This memetic algorithm, referred to as KaHyPar-E, outperforms all other multilevel algorithms on a very large benchmark set, where each algorithm is given $8$ hours of computation time to compute a partition \cite{andre2018memetic}. Several non-multilevel (flat) evolutionary algorithms had been developed for the HGP problem before the introduction of KaHyPar-E, however, none of these algorithms is regarded as competitive as the state-of-the-art multi level HGP tools \cite{cohoon2003evolutionary}.

Memetic algorithms (MA) are population-based metaheuristics that combine local search with genetic algorithms (GA) \cite{memeticSurvey}. Genetic algorithms mimic the biological evolution process using selection, recombination (crossover) and mutation operations to evolve population of individuals throughout generations to create fit individuals \cite{holland1992genetic}. A genetic algorithm starts with an initial population of individuals each of which corresponds to a random solution \cite{burke1998initialization}. A function, called fitness function, is used to measure the fitness of an individual \cite{mccall2005genetic}. At each generation, parents are selected from the population using a selection rule that generally favors fitter individuals \cite{goldberg1991comparative}. This way, the genes of weaker individuals disappear while the genes of fitter individuals are preserved over the generations \cite{srinivas1994genetic}. The genes of the parents are mixed using recombination operators to create one or more offspring \cite{holland1992genetic}. Each offspring replaces an individual from the population. There are several commonly used evolution schemes that control the evolution of the population. On one extreme, there is the \textit{complete replacement} scheme in which the next generation is completely composed of new offspring. On the other extreme, there is the \textit{steady-state} scheme in which only one individual is replaced by a new offspring \cite{mccall2005genetic}. Another widely used evolution scheme is called \textit{replacement-with-elitism}, and in this scheme only a small proportion of the current generation is preserved \cite{mccall2005genetic}. Genetic algorithms can prematurely converge to a local minimum if the diversity of the population is not maintained \cite{baker1985adaptive}. Mutation operators increase the diversity of the population by randomly changing the genes of individuals \cite{mutation2006survey}.

Several evolutionary algorithms are devised for the HGP problem over the years. Most of these algorithms do not use the multilevel paradigm, and are outperformed by the state-of-the-art multilevel HGP algorithms \cite{andre2018memetic}. On the other hand, KaHyPar-E, which incorporates the multilevel paradigm into its operators, is the only competitive evolutionary algorithm. In contrast to other evolutionary algorithms, which use mutation and recombination operators to obtain partitions, KaHyPar-E uses operators to guide the coarsening stage of a multilevel algorithm.

Our main contribution is to develop problem-specific mutation and recombination operators for the HGP problem that effectively explores the search space given a large amount of time. We introduce one recombination and two mutation operators and combine them with the KaHyPar framework to develop a new memetic algorithm. Our memetic algorithm outperforms KaHyPar-E and three most successful multilevel HGP algorithms kKaHyPar, PaToH, and hMetis on a large benchmark set, where each algorithm is given $2,4,8$ hours of computation time to find a partition. Our algorithm finds the best solutions in $112$, $115$, and $125$ instances in $2, 4$, and $8$ hours, respectively. Our experimental study indicate that the solutions found by our algorithm in $2$ hours are better than the solutions found by kKaHyPar, PaToH, and hMetis in $8$ hours. Furthermore, the solutions found by our algorithm in $4$ hours are better than the solutions found by KaHyPar-E in $8$ hours.

The rest of the paper is organized as follows. In Section \ref{sec:related_work}, we introduce the notation used throughout the paper and formally define the HGP problem. We review the existing literature on metaheuristics devised for the HGP problem, and the KaHyPar framework along with the operators of KaHyPar-E in Section \ref{sec:preliminaries}. In Section \ref{sec:mma}, we introduce our mutation and recombination operators. We evaluate the impact of our operators and compare our algorithm with the state-of-the-art HGP algorithms with three different time limits in Section \ref{sec:exp}. We conclude in Section \ref{sec:conc}.

\section{Preliminaries}
\label{sec:related_work}

An undirected hypergraph $H$ is a four-tuple $(V,N,w,c)$, where $V$ is the set of nodes, $N \subseteq 2^{V} \setminus \emptyset$ is the set of hyperedges (nets), $w: V \rightarrow \mathbb{R}_{\ge 0}$ is the node weight function, and, $c: N \rightarrow \mathbb{R}_{\ge 0}$ is the hyperedge cost function.

The set of nodes connected by a hyperedge $e$ are called the pins of $e$ and denoted by  $pins(e)$. We denote the set of hyperedges incident to a node $v$ with $N(v) \subseteq N$. A $k$-way partition $\Pi = \{V_1, V_2, \ldots, V_k\}$ of a hypergraph $H$ is a partition of $V$ into $k$ disjoint blocks such that $\cup_{i=1}^k V_i = V$, $V_i \cap V_j = \emptyset$ if $i \not= j$, and each block $V_i$ is not empty. A $k$-way partition $\Pi$ of $H$ is called an $\epsilon$-balanced $k$-way partition if the size of each block $V_i$ is no more than the average block size, i.e., for each $i$, $\sum_{v \in V_i} w(v) \le (1+\epsilon) \lceil \dfrac{\sum_{v \in V} w(v)}{k}\rceil$.

The set of blocks that contains a pin of a hyperedge $e$ is called the connectivity set of $e$, and denoted by $\Lambda(e)$. The size of the connectivity set of a hyperedge $e$ is called the connectivity of $e$ and denoted by $\lambda(e)$. 
Hyperedges with connectivity more than one are called cut hyperedges. We use $N_\Pi$ to denote the set of cut hyperedges in $\Pi$.

The $k$-way hypergraph partitioning problem is to find an $\epsilon$-balanced $k$-way partition $\Pi$ of a given hypergraph $H$ that minimizes a cost function defined over the cut nets $N_\Pi$. The two most commonly used cost functions are the \textit{cut} and \textit{connectivity} metrics. The cut metric is defined as the total cost of cut hyperedges, i.e.,  $cut_H(\Pi) = \sum_{e \in N_\Pi} c(e)$. The connectivity metric, also referred to as $(\lambda -1)$,  takes into account how many block that each hyperedges spans, and defined as $(\lambda - 1)_H(\Pi) = \sum_{e\in N} (\lambda(e) - 1)\ c(e)$. The special case of the problem where $k=2$ is referred to as bipartition or bisection. Two metrics are identical when $k = 2$, and they are both NP-hard to optimize\cite{garey1974some}. We focus on the connectivity metric in this paper since this is the most commonly used metric in the literature.

Node contraction merges two nodes $u,v$ into a single node $u$. After the contraction, weight of $u$ increases as much as the weight of $v$, i.e. $w(u) = w(u) + w(v)$. All hyperedges in $N(v) \setminus N(u)$ are updated by replacing $v$ with $u$, and $v$ is removed from all hyperedges in $N(v) \cap N(u)$. Node uncontraction reverts all these operations.

\section{Related Work and KaHyPar Framework}
\label{sec:preliminaries}
The HGP problem received lots of attention from VLSI and scientific computing communities due to its applications in these domains. Empirical results show that  general purpose KaHyPar \cite{akhremtsev2017engineering,heuer2017improving,schlag2016k,kahyparMF, kahyparHFC}, VLSI focused hMetis \cite{karypis1999multilevel,karypis2000multilevel}, and scientific computing focused PaToH \cite{patoh} stand out among the multilevel HGP algorithms from the perspective of the quality of the solutions found\cite{andre2018memetic}. The existing literature is too vast to summarize here, so we refer readers to \cite{alpert1995recent,schlagDis} for an extensive overview. Here we focus on evolutionary approaches developed for the HGP problem, and the KaHyPar framework which we incorporate our algorithm in.

\citet{saab1989evolution} proposes an evolutionary algorithm for solving a more complex HGP problem with a multi-objective cost function. The algorithm has only one individual generated using First Fit Decreasing heuristic \cite{johnson1973near}, and evolves this solution randomly by changing the blocks of nodes if the value of the objective function impoves more than an integer taken as parameter.
\citet{hulin1990circuit} proposes a genetic algorithm that uses a complex two-step encoding scheme for the Circuit (Hypergraph) Bipartitioning problem (i.e., $k = 2$). First, the (complex) components of the circuit are grouped, then these groups are divided and mapped to the blocks. They develop crossover and mutation operators suited for their encoding-scheme. The initial population is randomly generated by first selecting a random group for each component and then randomly dividing groups. \citet{bui1994fast} present a steady-state memetic algorithm for Hypergraph Bipartitioning problem with the ratio-cut metric. They use a 5-point crossover operator, a basic mutation operator that changes the blocks of nodes randomly, and a weak variation of the FM algorithm \cite{fiduccia1982linear} for the local search. They use a preprocessing step that reindexes the nodes by the visiting order of depth first search on the clique representation \cite{hagen1992new} of the hypergraph. They claim this improves the performance of the crossover operator because nodes that will likely to be assigned together have closer indexes. \citet{areibi2000integrated} proposes a memetic algorithm that uses a variant of $k$-way FM algorithm \cite{sanchis1989multiple} for Circuit Partitioning with the graph model. This algorithm is adapted for hypergraph partitioning and improved by using a preprocessing step that contracts nodes to reduce the complexity of the problem \cite{areibi2004effective}. Also, they use a solution found by a GRASP metaheuristic along with random solutions to create the initial population.

The first algorithm that combines multilevel paradigm with evolutionary algorithms is proposed by \cite{soper2004combined} for the Graph Partitioning problem. Their recombination and mutation operators modify the edge weights of the input graph such that the underlying multilevel partitioner is guided while searching for a new partition. \citet{benlic2011multilevel} presents a multilevel memetic algorithm for the Graph Partitioning problem where $\epsilon = 0$. They use a multi-parent recombination operator which ensures that the balance of the parent solution is not degraded and refine the offspring with a perturbation-based Tabu Search algorithm. \citet{sanders2012distributed} proposes a parallel multilevel evolutionary algorithm for the Graph Partitioning problem. They use two crossover operators that ensure the solution quality does not degrade. The first crossover operator restricts the contraction of nodes that are cut edges in at least one of the parents. The second crossover operator uses Natural Cuts \cite{delling2011graph}, which is originally proposed as a preprocessing method for partitioning of road networks. They also use two mutation operators based on V-cycles.

More recently, \citet{andre2018memetic} presented the first multilevel memetic algorithm, called KaHyPar-E, for the HGP problem. They use recombination and mutation operators that either restrict the contraction of some nodes, or modify the contraction scores of nodes. We describe KaHyPar-E in more detail in Section \ref{sec:kahypar_memetic}. \citet{preen2019evolutionary} proposes an evolutionary algorithm for the initial partitioning stage. They use a uniform crossover operator with parental alignment, and a mutation operator that randomly changes the blocks of nodes. They compare their algorithm with the initial partitioning algorithms used in kKaHyPar after applying FM algorithm to both initial partitions. Solution quality does not differ much when the hypergraph is coarsened until there are $150 \cdot k$ nodes (this is a common value in multilevel algorithms), but they obtain better solutions when there are more than $15000 \cdot k$ nodes.

\subsection{KaHyPar Framework}

KaHyPar is a general purpose hypergraph partitioning framework developed over the series of papers \cite{schlag2016k,heuer2017improving,andre2018memetic,kahyparMF,kahyparHFC,schlag2021high}. The latest version of the k-way direct partitioner in KaHyPaR framework, referred to as kKaHyPar, is considered the best partitioner \cite{schlag2021high}. We first present a high level description of the kKaHyPar algorithm since both our algorithm and KaHyPar-E use it as a subroutine.

kKaHyPar employs two preprocessing techniques before the coarsening stage. The first one is called pin sparsification and it contracts some pins of large hyperedges to reduce the average pin size of hyperedges. The second method uses Louvain algorithm \cite{blondel2008fast} to detect the community structure of the hypergraph \cite{heuer2017improving}. This information is used to guide the coarsening process.

At each level of the coarsening stage, kKaHyPar contracts two nodes into a single node. Notice that the coarsening stage is composed of nearly $n$-levels. To rate node pairs for contraction, it uses the heavy-edge function \cite{karypis1999multilevel}. For a node pair $(u,v)$, the heavy-edge function is defined as follows.

$$ r(u,v) = \sum_{e \in N(u) \cap N(v)} \dfrac{c(e)}{|pins(e)| -1}$$

At each level of the coarsening stage, kKaHyPar chooses a random node $v$ and than finds its contraction partner $v$ from the \textit{eligible nodes} for which $r(u,v)$ is maximized. Nodes $u$ and $v$ are eligible for contraction if they belong to the same community (found at the preprocessing step), and $w(u) + w(v) \le \kappa$, where $\kappa = \lceil \frac{\sum_{v \in V} w(v)}{t \cdot k} \rceil$ for some parameter $t$. In kKaHyPar, $t$ is selected as $150$ \cite{kahyparHFC}. The coarsening stage ends when there are less than $t \cdot k$ nodes, or there are no valid contractions, i.e., no node has an eligible partner.

To find the initial partition to the coarsest hypergraph, kKaHyPar recursively bipartitions the hypergraph until a $k$-way partition is found \cite{schlag2016k}. It employs a pool of $9$ randomized algorithms to compute bipartitions \cite{schlag2021high}, and runs each algorithm $20$ times. Each partition is refined using local search algorithms based on FM heuristics \cite{fiduccia1982linear, sanchis1989multiple}, and the best partition is used as the initial partition. Then this initial partition is projected to larger hypergraphs at the uncoarsening stage. kKaHyPar uses $k$-way FM heuristic along with a local search algorithm based on maximum flows (MF) \cite{kahyparMF,kahyparHFC}. $k$-way FM heuristic is executed a large number of times at each level of the uncoarsening stage. The more complex and time-consuming MF heuristic is executed once at each level $l$ where $l$ is an exact power of two. For details on the MF heuristic, see \cite{kahyparMF, kahyparHFC}.

\subsection{KaHyPar-E}
\label{sec:kahypar_memetic}
KaHyPar-E is a multilevel memetic algorithm that outperforms all other hypergraph partitioners when given a large amount of time\cite{andre2018memetic}. KaHyPar-E takes a time limit $t$ as an input. The initial population is generated using kKaHyPar algorithm. Since kKaHyPar also takes non-negligible time, the size of the population is determined dynamically as follows. First, kKaHyPar is executed once and its running time is recorded. Then, roughly $15\%$ of the time limit $t$ is used to generate the initial population. The population size is bounded below with $3$ to ensure diversity, and it is bounded above with $50$ to ensure convergence.
This initial population is then evolved over generations with the steady-state paradigm where only one offspring is created at each generation via a mutation or a recombination operation. KaHyPar-E uses two mutation operators and two recombination operators, and each operator is picked with a probability of $0.25$. The individual that will be replaced by the offspring is chosen by a replacement strategy that considers both solution quality (fitness) and the similarity of individuals. The fitness of an individual is determined by its objective function value, and the one with a lower objection function value is considered fitter. Contrary to other evolutionary algorithms developed for the HGP problem, it uses problem-specific mutation and combination operators instead of problem-agnostic operators. We next describe these operators as well as the replacement strategy. We use the terms individual and solution interchangeably. 

\medskip

\textbf{Recombination Operators:} KaHyPar-E uses two recombination operators. The first one, which we refer to as $C_1$, works similar to the V-cycles but it is more restrictive. First, it selects two parents using two-way tournaments. The fitter parent becomes the first parent and the other becomes the second parent. Let $b_i[u]$ denote the block of node $u$ in parent $i$. This operator coarsens two nodes $u$ and $v$ if and only if $b_1[u] = b_1[v] \land b_2[u] = b_2[v]$, i.e., if the nodes $u$ and $v$ are placed to the same blocks in both parents. This constraint ensures that the partitions of both parents are still valid partitions after any possible contraction. Contracting nodes beyond the maximum allowed node weight $\kappa$ could lead to unbalanced initial partitions. But the contraction restriction ensures that the partitions of each parent are still valid after coarsening. Therefore, instead of finding new initial partition, partition of the fitter parent is used. Then, the solution is refined at the uncoarsening stage. This operator actually creates a new coarsening scheme for the first parent and then applies local search algorithms at the uncoarsening stage. Thus, it ensures that the solution quality is maintained even if no improvements are found. But, this also means that the offspring created will be the same as the first parent if no improvements are found. Normally, this could lead to a premature convergence since the fittest individual could possibly replace all other individuals if no improvements were found for the fittest individual throughout the generations. But, the similarity measure used in the replacement scheme ensures that no individual is allowed to be in the population more than once.

The second recombination operator, which we refer to as $C_2$, uses the fittest $p$ individuals, for some parameter $p$, to determine the nodes to be coarsened. The \textit{frequency} of a hyperedge $e$, denoted by $f(e)$, is defined as the number of the fittest $p$ individuals for which $e$ is in the cut. Hyperedge frequencies are incorporated into the rating function used at the coarsening stage by replacing the heavy-edge function with the following function, which favors the node pairs that share a large number of small hyperedges with low frequencies.

$$ r(u,v) = \dfrac{1}{w(u) \cdot w(v)} \sum_{e \in N(u) \cap N(v)}\dfrac{exp(-\gamma \cdot f(e))}{|pins(e)|} $$

In this function, $\gamma$ is a tuning parameter that controls the impact of frequencies, which is chosen as $0.5$. The number of individuals $p$ is chosen as the square root of the population size. 

\medskip

\textbf{Mutation Operators:} KaHyPar-E uses two mutation operators, both of which is based on V-cycles. The first mutation operator, which we refer to as $M_1$, is exactly the same as the V-cycle method. It creates a new coarsening scheme for a solution by blocking the contractions of the nodes that are placed in different blocks. After the coarsening stage, $M_1$ uses the original partition as the initial partition, and proceeds with the uncoarsening stage. The coarsening stage of the second mutation operator, which we refer to as $M_2$, is the same as that of $M_1$. As opposed to $M_1$, $M_2$ does not skip the initial partitioning stage, but proceeds with the initial partitioning stage of kKaHyPar to find an initial partition. This initial partition is refined at the uncoarsening stage. Notice that $M_1$ ensures that the solution quality will not degrade. However, as is the case with $C_1$, the solution will not change if no improvements are found. This is not the case for the second mutation operator.

\medskip

\textbf{Replacement Strategy:} Recall that $C_1$ and $M_1$ finds an existing solution if no improvement is found at the uncoarsening stage. Therefore, maintaining the diversity of the population is very important for KaHyPar-E. Evicting the worst solution from the population may lead to a premature convergence. Thus, a replacement strategy that considers the \textit{similarity} between individuals as well as the solution quality is used. It uses a sophisticated similarity measure instead of the Hamming distance, which is used in HGP algorithms \cite{bui1994fast,kim2004hybrid}. In this similarity measure, each individual $I_i$ is represented with a multiset $D_i$ in which each cut hyperedge $e$ of individual $I_i$ appears $(\lambda(e) - 1)$ times. The difference between two individuals $I_i$ and $I_j$ is defined as the cardinality of the symmetric difference between $D_i$ and $D_j$, i.e., $d(I_1,I_2) = |(D_1 \setminus D_2) \cup (D_2 \setminus D_1)|$. When an offspring $o$ is created, it replaces the individual whose difference to $o$ is smallest, among the pool of individuals whose fitness is no more than that of $o$.

\section{Multilevel Memetic Algorithm}
\label{sec:mma}
Our algorithm is built on top of the KaHyPar-E algorithm. Unless stated otherwise, we use the components of the KaHyPar-E as described above. We first introduce a novel greedy recombination operator that contracts nodes using the blocks of two parents. Contrary to the recombination operator $C_1$, which only contracts nodes that both parents agree on, our greedy recombination operator contracts a subset of nodes in the same block of one of the parents. The block that will be used for contraction is determined by using a rating function. We then introduce two new mutation operators that slightly differ from $M_1$ and $M_2$. Proposition \ref{prop:contraction} given below presents the theoretical insight that led to our greedy recombination operator.

\begin{proposition}
\label{prop:contraction}
Let $\Pi =\{V_1,V_2,\ldots, V_k\}$ be an optimal $\epsilon$-balanced k-way partition of a hypergraph $H$. Let $u$ and $v$ be two nodes that are in the same block in $\Pi$. Let $H'$ be the hypergraph obtained from $H$ by contracting $u$ and $v$. The cost of an optimal $\epsilon$-balanced k-way partition of $H'$ is equal to the cost of $\Pi$.	
\end{proposition}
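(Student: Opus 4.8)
The plan is to exhibit a cost-preserving and balance-preserving correspondence between the $\epsilon$-balanced $k$-way partitions of $H'$ and those $\epsilon$-balanced $k$-way partitions of $H$ in which $u$ and $v$ lie in the same block, and then read off the claimed equality from two one-line inequalities. The starting observation is that node contraction, as defined in Section~\ref{sec:related_work}, induces a natural bijection $e \mapsto e'$ between the net set of $H$ and the net set of $H'$ that preserves the cost function $c$: a net $e$ with $u,v \in pins(e)$ maps to $e' = e \setminus \{v\}$, a net containing exactly one of $u,v$ maps to the net obtained by renaming that pin to the contracted node, and every other net is unchanged. No net is deleted, since a net cannot become empty (if $v$ is its only pin it becomes the singleton $\{u\}$). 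Write $w'$ for the weight function of $H'$, so the contracted node has weight $w(u)+w(v)$, all other weights agree with those in $H$, and the total node weight is unchanged.

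Next I would set up the two directions of the correspondence. Given an $\epsilon$-balanced $k$-way partition $\Pi'' = \{V_1'',\ldots,V_k''\}$ of $H'$, define $\widehat{\Pi}$ on $H$ by assigning both $u$ and $v$ to whichever block contains the contracted node and leaving every other node in place; conversely, any $\epsilon$-balanced partition of $H$ that keeps $u$ and $v$ together collapses to a partition of $H'$ in the obvious way, and these two maps are mutually inverse. I would then verify the two invariants. Balance: passing between $H$ and $H'$ only merges or splits $u$ and $v$ inside a single block, so every block weight is preserved and no block becomes empty, hence $\epsilon$-balance is preserved in both directions; in particular the maximum-weight constraint $\kappa$ plays no role here, because $u$ and $v$ already lie in a common block of the $\epsilon$-balanced partition $\Pi$. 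Cost: for each net $e$ of $H$ and its image $e'$, the set of blocks meeting $e'$ equals the set of blocks meeting $e$ — this is exactly where the hypothesis that $u$ and $v$ share a block is used, since deleting $v$ from a net still containing $u$, or renaming $v$ to the contracted node, neither removes the common block of $u$ and $v$ from the connectivity set nor adds a new block. Thus $\lambda(e') = \lambda(e)$ net by net, and summing against $c$ gives $(\lambda-1)_{H'}(\Pi'') = (\lambda-1)_H(\widehat{\Pi})$, and symmetrically in the other direction.

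With this correspondence in hand the proposition is immediate. Since $\Pi$ keeps $u$ and $v$ in the same block, it collapses to an $\epsilon$-balanced partition of $H'$ of the same cost, so the cost of an optimal $\epsilon$-balanced partition of $H'$ is at most $(\lambda-1)_H(\Pi)$; conversely, lifting an optimal $\epsilon$-balanced partition of $H'$ to $H$ yields an $\epsilon$-balanced partition of $H$ of the same cost, which is at least $(\lambda-1)_H(\Pi)$ by optimality of $\Pi$. Combining the two inequalities gives the equality. The only real care needed is the bookkeeping around degenerate nets — those of size one or two, which remain non-cut under both partitions and contribute $0$ on both sides — and checking that the contraction map on nets is a genuine bijection when nets are treated as distinguished objects; beyond that there is no substantive difficulty.
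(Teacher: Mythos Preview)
Your proof is correct and follows essentially the same route as the paper: collapse $\Pi$ to get the upper bound on the optimum of $H'$, then lift an optimal partition of $H'$ back to $H$ to get the lower bound. The paper does this in two sentences per direction (phrasing the second as a contradiction and leaving the cost and balance preservation implicit), whereas you spell out the net-by-net argument that $\lambda(e') = \lambda(e)$ and the block-weight invariance; this extra care is sound but not needed for the simple observation the paper is recording.
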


\begin{proof}
Consider the partition $\Pi'$ that is obtained by contracting nodes $u$ and $v$ in $\Pi$. Notice that $\Pi'$ is an $\epsilon$-balanced k-way partition of $H'$, and the cost of $\Pi'$ is equal to that of $\Pi$. Thus, the cost of an optimal $\epsilon$-balanced k-way partition of $H'$ is no more than that of $\Pi$. Let $\Pi^*$ be an optimal $\epsilon$-balanced k-way partition of $H'$. All we need is to show that the cost of $\Pi^*$ is no less than that of $\Pi$.

For the sake of contradiction, assume that the cost of $\Pi^*$ is strictly less than that of $\Pi$. Let $\tilde{\Pi}^*$ be the partition obtained by contracting nodes $u$ and $v$ in $\Pi^*$. But then, $\tilde{\Pi}^*$ is an $\epsilon$-balanced k-way partition of $H$, and the cost of $\tilde{\Pi}^*$ is equal to that of $\Pi^*$. This contradicts with $\Pi$ being an optimal $\epsilon$-balanced k-way partition of $H$.  	

\end{proof}

\subsection{Greedy Recombination Operator}

Let $\Pi$ be an optimal $\epsilon$-balanced k-way partition of a hypergraph $H$. 
Let $H'$ be the hypergraph obtained from $H$ by contracting a subset (possibly all) of nodes of any block of $\Pi$. Notice that, due to the proof of Proposition \ref{prop:contraction},  the problem of finding an optimal  $\epsilon$-balanced k-way partition of a hypergraph $H$ is equivalent to to the problem of finding that of the smaller (contracted) hypergraph $H'$. However, deciding if any subset of nodes $V'$ appears in the same block of an optimal partition of a hypergraph is NP-hard. Our greedy recombination operator is motivated by the observation of Proposition \ref{prop:contraction}, and aims to approximate the aforementioned NP-hard problem.

The greedy recombination operator selects two parents by two-way tournaments. Then it sorts the blocks of each parent with respect to a \textit{quality measure}. It selects the block with the highest quality, and assigns all nodes in this block to a new cluster. It removes these nodes from the blocks of the other parent, and recomputes the qualities of these blocks. This procedure is repeated until either each node is assigned to a cluster, or $\frac{3k}{2}$ blocks are selected. The nodes that are not assigned to a cluster are assigned to a special cluster called $0$. These cluster are used to guide the coarsening stage of the kKaHyPar algorithm as follows. At the coarsening stage, two nodes $u$ and $v$ are only allowed to be contracted if they belong to the same cluster, and this cluster is not cluster $0$. The coarsening stage ends only if there are no possible contractions. This operator uses kKaHyPar algorithm with this modified coarsening stage to create an offspring.

The motivation for selecting no more than $\frac{3k}{2}$ blocks is as follows. Clustering the nodes in the remaining $\frac{k}{2}$ blocks may lead to less ideal contractions since we already selected the best $\frac{3k}{2}$ block. In addition to that, allowing contractions even after there are less than $t \cdot k$ nodes remaining may create small number of nodes with uneven weights, which may complicate the initial partitioning stage \cite{karypis2000multilevel}.

The rating function used to measure the block quality is crucial for the performance of the greedy recombination operator. Different operators can be developed by using different rating functions. We believe that a rating function needs to satisfy the following conditions.

\begin{itemize}
	\item The quality of a block should not depend on the number of nodes in contains since we remove the contracted nodes from the other parent.
	\item It should rank a \textit{good} block higher compared to a \textit{bad} block.
\end{itemize}

We say that a block is  good if it contains either a high or a low fraction of pins of incident hyperedges. For instance, consider three blocks $V_1$, $V_2$, and $V_3$, each of which is incident on $2$ hyperedges. $V_1$ contains $\frac{1}{4}$ and $\frac{3}{4}$ of the pins of the incident hyperedges. $V_2$ contains $\frac{1}{2}$ of the pins of both of the incident  hyperedges. $V_3$ contains $\frac{1}{2}$ and $\frac{3}{4}$ of the pins of incident hyperedges. Notice that $V_1$ is defined to be a better block than $V_2$. This is because $V_1$ is easier to improve using local search algorithms. Furthermore, we want $V_3$ to be better than $V_1$ since it has higher average of fraction of pins. We use the rating function given below which satisfies these conditions. Let $E_i = \cup_{v \in V_i}N(v)$.

$$ r(V_i) = \frac{1}{|E_i|}\sum_{e \in E_i} \left(\dfrac{|pins(e) \cap V_i|}{|pins(e)|}\right)^2 $$

Notice that our greedy recombination operator is not particularly useful when there are only two blocks. Since for $k=2$ the greedy recombination operator finds one of the parent solutions. We use $C_3$ to denote the greedy recombination operator.

\subsection{Mutation Operators}
The mutation operators cluster the nodes of the hypergraph using an already computed partition. They use these clusters to create a new coarsening scheme by blocking some contractions. Operators first find the connected components in each block. If a connected component does not contain all pins of at least one hyperedge, then nodes on this connected component are assigned to the special cluster $0$. Then, all of the remaining connected components on each block $i$ are assigned to a single cluster number $i$.

The first operator only allows the contraction of nodes $u$ and $v$ if $u$ and $v$ are assigned to the same cluster, and the assigned cluster is not the special cluster $0$. Notice that this operator does not allow the contraction of nodes $u$ and $v$ that are in different blocks. Therefore, after the modified coarsening stage, the original partition is valid and can be used as an initial partition to the coarsened hypergraph. Thus, the initial partitioning stage is skipped and the partition is refined at the uncoarsening stage. Notice that if each block only contains a single connected component, and there is at least one hyperedge whose pins are assigned to this block, then this operator works the same as the mutation operator $M_1$.

The second operator allows the contraction of nodes $u$ and $v$ only if $u$ and $v$ are assigned to the same cluster, or either $u$ or $v$ is assigned to the special cluster $0$. The original partition may not be feasible after the modified coarsening stage since nodes from different blocks can be contracted. The modified coarsening stage of this operator is followed by the initial partitioning stage of kKaHyPar. Then, the initial partition is refined at the uncoarsening stage. Notice that if each block only contains a single connected component, and there is at least one hyperedge whose pins are assigned to this block, then this operator works the same as the mutation operator $M_2$. We use $M_3$ and $M_4$ to denote these mutation operators, respectively.

\section{Experimental Evaluation}
\label{sec:exp}
\textbf{Experimental Setup:} We implemented $C_3$, $M_3$ and $M_4$ using KaHyPar framework. 
We performed all experiments on a cluster with $3$ machines each of which has 2 Intel Xeon 6148 Icosa-core processors clocked at 2.4GHz, and 384 GB RAM. We solved instances in parallel and one core and 8GB is reserved for each instance. We compare our algorithm with KaHyPar-E, kKaHyPar, PaToH 3.2 \cite{patoh} and hMetis-R \cite{karypis1999multilevel, karypis2000multilevel}. While our algorithm and KaHyPar-E spend all of their time evolving a population, kKaHyPar, PaToH, and hMetis-R compute new partitions using different random seeds until the time limit is reached. We use kKaHyPar with its default parameters given in \cite{schlag2021high} with the V-cycle refinement technique at most $100$ times after computing a partition as in \cite{andre2018memetic}. hMetis-R uses a different balance definition and do not directly optimizes connectivity metric. Instead, it optimizes Sum of External Degrees (SOED) metric which is closely related to the connectivity metric \cite{schlag2021high}. We used PaToH and hMetis-R with their default parameters, and chose $\epsilon$ for hMetis-R as described in \cite{andre2018memetic}.

We evaluate the performance of different configurations of our algorithm in Section \ref{sec:impact}. For that, each configuration of our algorithm is run $3$ times with different random seeds and, with a $2$ hour time limit. To save time we only used the instances with $32$ blocks. We used the same random seeds for each configuration so that they start with the same initial population. We use \textit{convergence plots} to show how the mean connectivity of best solution of each instance is evolved over time for each configuration.

We compare the performance of our algorithm with KaHyPar-E in Section \ref{sec:compare_kahypar_e}, and with the other state-of-the-art hypergraph partitioners in Section \ref{sec:compare_all}. We run each algorithm $5$ times with different seeds for each instance with $2,4$ and $8$ hour time limits, respectively. We run our algorithm and KaHyPar-E with the same seeds so that they start with the same initial population to mitigate the impact of randomness. We use \textit{performance plots} \cite{andre2018memetic} to compare the performances of algorithms with the best found solution on a per-instance basis.

\textbf{Convergence plots}: Convergence plots show how the mean connectivity of best solutions for each instance evolved over time. First, we compute the best solution found until time $t$ for each instance and configuration. To combine the results obtained by different seeds for a single instance, we use arithmetic mean. To combine the results of different instances, we use geometric mean so that each instance can contribute equally. The y-axis shows the mean connectivity of all instances used and the x-axis shows the time.

\textbf{Performance plots}: 
For each instance, the connectivity of the solution found by each algorithm is divided to that of the the best solution for that instance. This is repeated for each instance and results are aggregated such that the plot shows the fraction of instances where each algorithm produces solutions that are $r$ times worse compared to the best. The x-axis shows the quality relative to the best solution and the y-axis shows the fraction of instances. For instance, a point $(r,f)$ for an algorithm $A$ in a performance plot shows that the algorithm $A$ finds solutions that are at most $r$ times higher than the best solution in $f$ fraction of the instances. Notice $r$ is monotone non-decreasing. 
At $r=1$, performance plots show the fraction of instances where each algorithm found the best solution.

\begin{table}[h]
	\caption{Benchmark Set D\cite{andre2017evolutionary}}
	\label{tab:hypergraph_stats}
	
	\begin{tabular}{lccc}
		Hypergraph & n & m & p \\ \hline
		\multicolumn{4}{c}{ISPD89} \\ \hline
		ibm06 & 32,498 & 34,826 & 128,182 \\
		ibm07 & 45,926 & 48,117 & 175,639 \\
		ibm08 & 51,309 & 50,513 & 204,890 \\
		ibm09 & 53,395 & 60,902 & 222,088 \\
		ibm010 & 69,429 & 75,196 & 297,567 \\ \hline
		\multicolumn{4}{c}{SAT14 Dual}  \\ \hline
		6s133 & 140,968 & 48,215 & 328,924 \\
		6s153 & 245,440 & 85646 & 572692 \\
		6s9   & 100,384 & 34317 & 234228 \\
		dated-10-11-u & 629,461 & 141,860 & 1,429,872 \\
		dated-10-17-u & 1,070,757 & 229,544 & 2,471,122 \\ \hline
		\multicolumn{4}{c}{SAT14 Literal}  \\ \hline
		6s133 & 96,430  &  140,968 &  328,924 \\
		6s153 &  171,292 &  245,440  & 572,692 \\
		aaai10-planning-ipc5  &  107,838  & 308,235 &  690,466 \\
		\verb|atco_enc2_opt1_05_21|  & 112,732  & 526,872  & 2,097,393 \\
		dated-10-11-u &  283,720  & 629,461  & 1,429,872 \\ \hline
		\multicolumn{4}{c}{SAT14 Primal}  \\ \hline		
		6s153 & 85,646  &  245,440 &  572,692 \\
		aaai10-planning-ipc5  &  53,919  & 308,235 &  690,466 \\
		\verb|atco_enc2_opt1_05_21|  & 56,533  & 526,872  & 2,097,393 \\
		dated-10-11-u &  141,860  & 629,461  & 1,429,872 \\
		hwmcc10-timeframe &163,622 & 488,120 & 1,138,944 \\ \hline
		\multicolumn{4}{c}{SPM} \\ \hline
		\verb|laminar_duct3D| & 67,173 &67,173 &3,833,077 \\
		\verb|mixtank_new| & 29,957 & 29,957 & 1,995,041 \\
		\verb|mult_dcop_01| & 25,187  & 25,187  &193,276 \\
		RFdevice & 74,104 & 74,104 & 365,580 \\
		vibrobox &12,328 & 12,328 & 342,828 \\ \hline
	\end{tabular}
\end{table}

\textbf{Instances}: We use Benchmark Set D of \citet{andre2018memetic} with $25$ unweighted hypergraphs that consists of the instances from ISPD98 VLSI Circuit Benchmark Suite \cite{alpert1998ispd98}, the SuiteSparse Matrix Collection \cite{davis2011university}, and the SAT Competition 2014 \cite{belov2014proceedings}. The respective sizes and application domains of the hypergraph are presented in Table \ref{tab:hypergraph_stats}. Each hypergraph is partitioned into $k$ blocks where $k \in \{4,8,16,32,64,128\}$, and we set $\epsilon$ to $0.03$ as in \cite{andre2018memetic}. Thus, we have $150$ instances in total. Our experiments take $39,000$ CPU hours in a single-core computer.

\subsection{Impact of Algorithmic Components}
\label{sec:impact}
The details of the configurations we considered are given in Table \ref{tab:confiration}. The first column lists the configuration names, and the last $2$ columns give the selection probabilities of recombination and mutation operations for each configuration Columns $2-4$ show the selection probability of each recombination operator when the recombination operation is selected. Columns $5-8$ show the selection probability of each mutation operator when the mutation operation is selected.
\begin{table}[h]
	\caption{Configuration Table}
	\label{tab:confiration}
	\begin{tabularx}{\columnwidth}{l|XXX|XXXX|XX}
		\hline
		Configuration & \multicolumn{7}{c|}{Operator Selection Probability}\\
		Name & ${C_1}$ & $C_2$ & $C_3$ & $M_1$ & $M_2$ & $M_3$ & $M_4$ & $C$ & $M$  \\ \hline
		KaHyPar-E & 0.5 & 0.5 & 0 & 0.5 & 0.5 & 0 & 0 & 0.5 & 0.5\\
		MMA-M-0.5 & 0.4 & 0.2 & 0.4 & 0.25 & 0.25 & 0.25 & 0.25 & 0.5 & 0.5\\
		MMA & 0.4 & 0.2 & 0.4 & 0.25 & 0.25 & 0.25 & 0.25 & 0.8 & 0.2\\
		MMA-G & 0 & 0 & 1 & 0.25 & 0.25 & 0.25 & 0.25 & 0.8 & 0.2\\	
		MMA-EQ-C & 0.33 & 0.33 & 0.33 & 0.25 & 0.25 & 0.25 & 0.25 & 0.8 & 0.2\\ \hline
	\end{tabularx}
	
\end{table}

Figure \ref{fig:effects} shows how the mean best solution of each configuration is evolved over time. The MMA-G configuration which uses a single recombination operator $(C_3)$ performs the worst. The MMA-M-05 which uses all recombination and mutation operators performs slightly better than KaHyPar-E which shows the effectiveness of the new operators. The MMA uses less mutation operation compared to MMA-M-05 and performs better. This shows that $C_3$ contributes to the diversity of the population since lowering mutation probability decreases the performance of KaHyPar-E \cite{andre2018memetic}. The performance of MMA-EQ-C, which uses all recombination operators with equal probability, is similar to that of MMA. We choose MMA configuration as our final configuration since it performs best.

\begin{figure}[h]
	\centering
	
	\includegraphics[width=\linewidth]{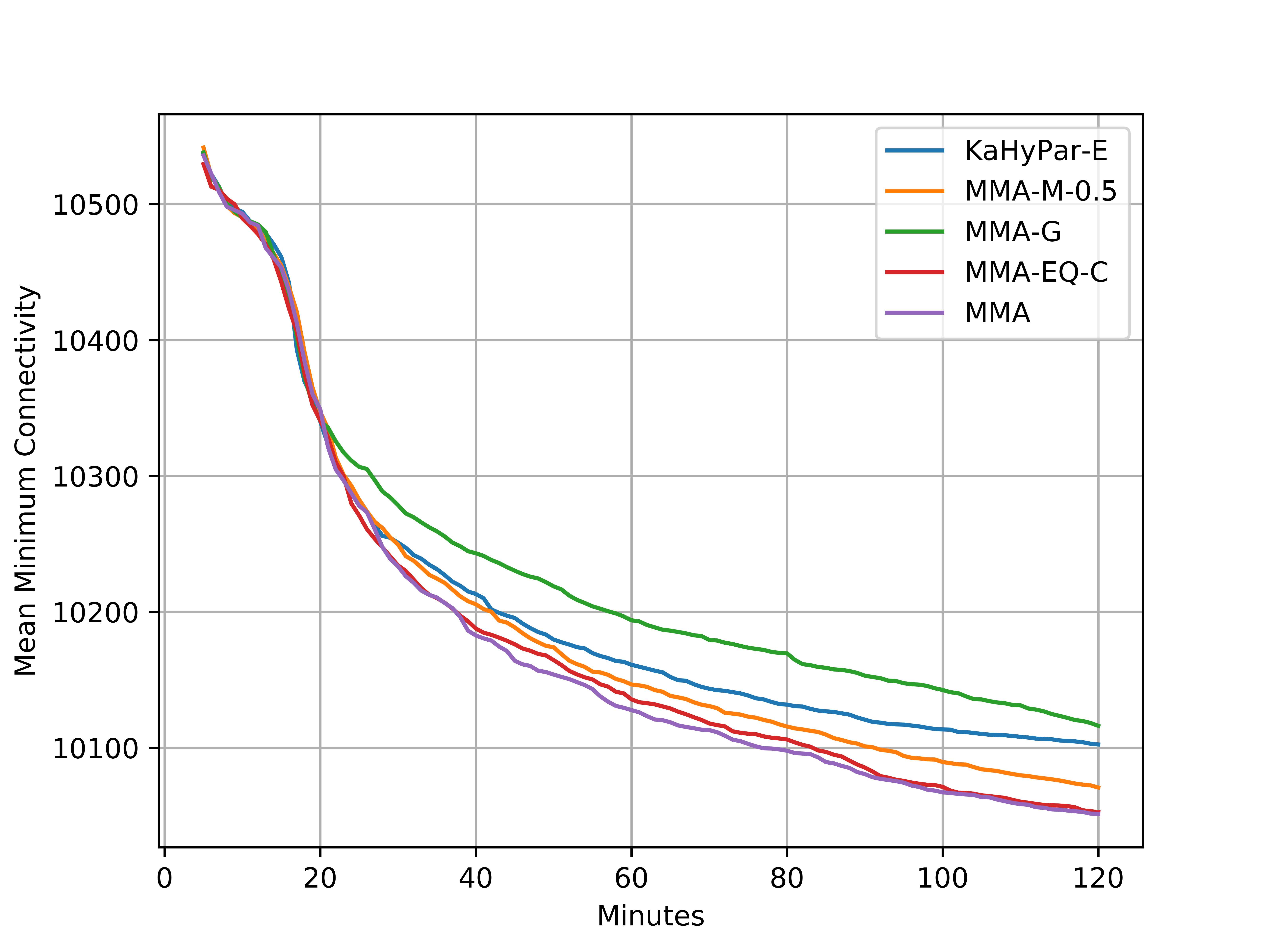}
	\caption{Convergence plot for different algorithmic configurations}
	\label{fig:effects}
\end{figure}

\subsection{Comparison with KaHyPar-E}
\label{sec:compare_kahypar_e}

\begin{figure}[h]	
	\centering

\includegraphics[width=0.95\linewidth]{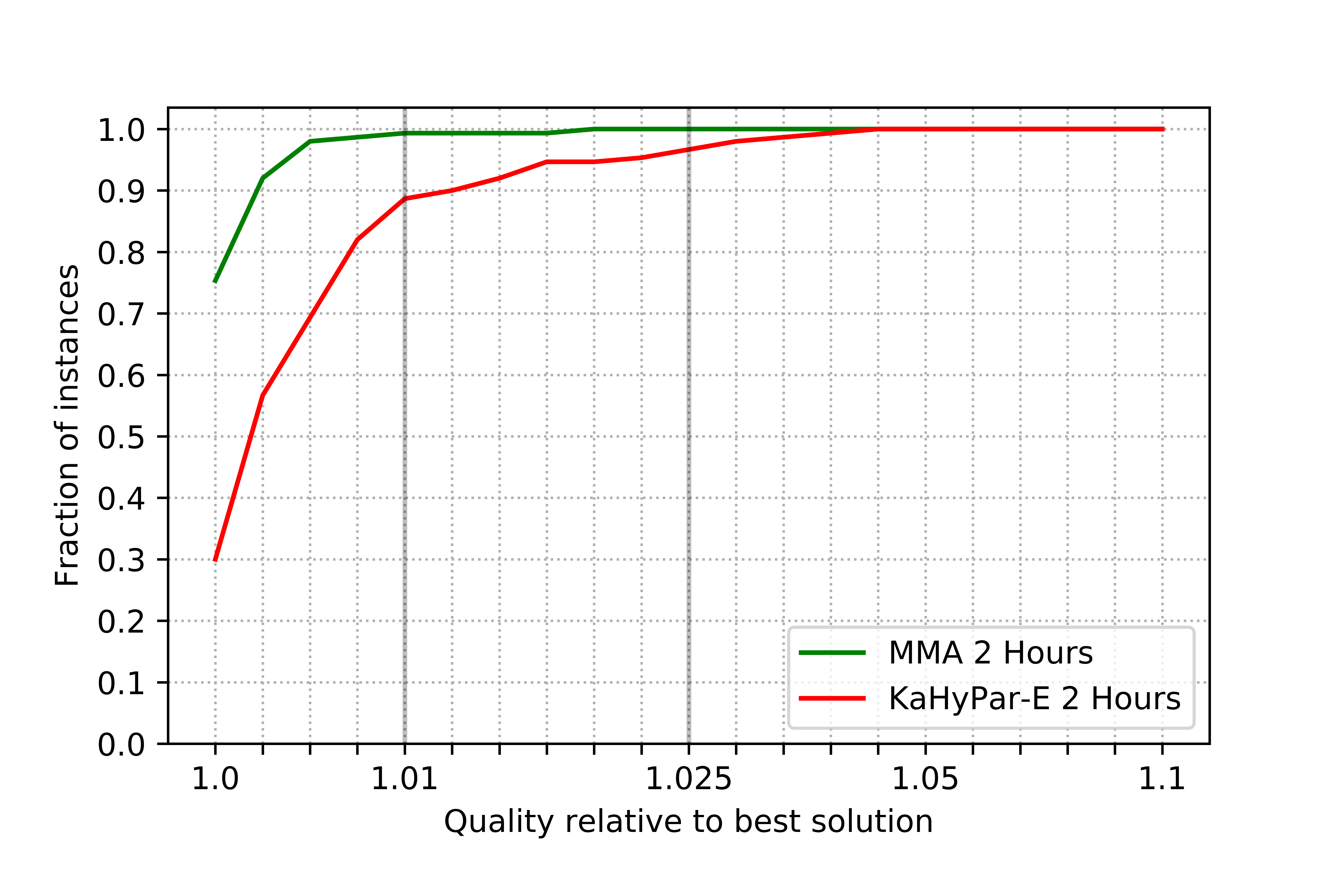}
\includegraphics[width=0.95\linewidth]{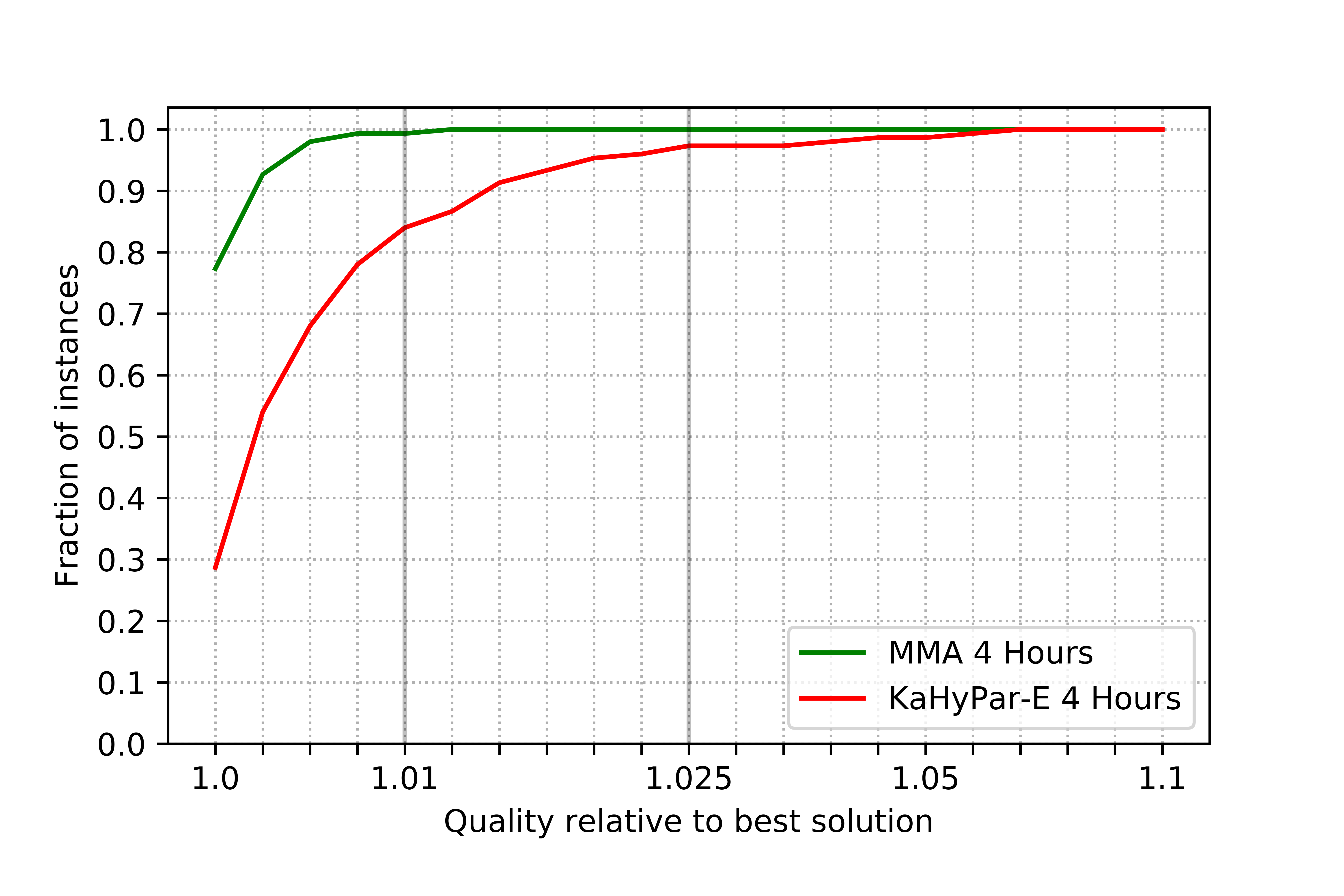}
\includegraphics[width=0.95\linewidth]{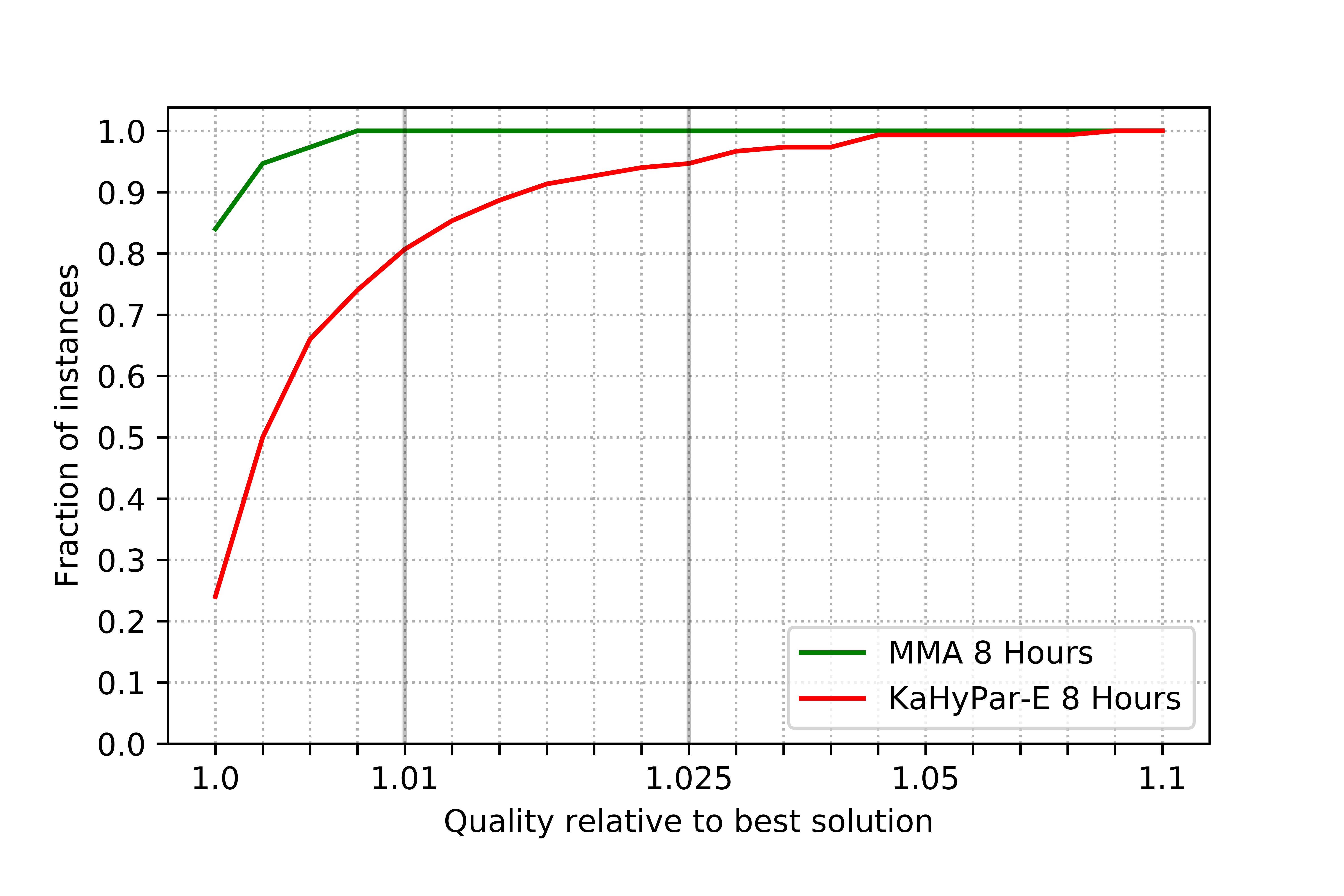}
	
	\caption{Performance plots comparing MMA and KaHyPar-E for 2, 4, and 8 hour time limits}
	\label{fig:perf_kahypare}
	
\end{figure}

Figure \ref{fig:perf_kahypare} shows that MMA consistently outperforms KaHyPar-E for each time limit. Furthermore, the performance gap widens as the computation time increases. When each algorithm is given $2$ hours, MMA computes solutions with better, equal, and worse  quality compared to KaHyPar-E on $105,8$ and $37$ instances, respectively. For the  $8$ hour time limit, MMA computes solutions with better, equal, and worse  quality compared to KaHyPar-E on $114,12$ and $24$ instances, respectively. For the setting where MMA is given $4$ hours and KaHyPar-E is given $8$ hours, MMA computes solutions with better, equal, and worse  quality on $75,11,64$ instances, respectively.

Table \ref{tab:average_improvement_kahypare} shows the average improvements over KaHyPar-E for different time limits and number of blocks. For instances with $k\ge 32$, MMA performs $0.46\%$, $0.57\%$, $0.8\%$  better on average than KaHyPar-E, in $2,4,8$ hours, respectively. For some instances, MMA returns $8.1\%$ better solutions than KaHyPar-E, however, there is no instance for which KaHyPar-E returns a solution that is $1\%$ better than MMA. A Wilcoxon matched pairs signed rank test \cite{wilcoxon1992individual} show that the average improvements by MMA over KaHyPar-E are statistically significant for all time limits (all $p$ values are less than $10^{-10}$).

\begin{table}[h]
	\caption{Left-side of the table shows the average improvement over KaHyPar-E in percentages. Right-side of the table shows the number of instances that MMA found the best solutions.}
	\label{tab:average_improvement_kahypare}
	\begin{tabular}{cccc|ccc}
		\hline
		&\multicolumn{3}{c}{Average} & 	\multicolumn{3}{|c}{Best Solutions} \\
		k & 2 Hour & 4 Hour & 8 Hour & 2 Hour & 4 Hour & 8 Hour   \\
		\hline
		
		4 		& -0.11  & 0.04   & -0.06 &	16	& 18		& 18	\\
		8 		&  0.28  & 0.46   &	0.38  &	19	& 20		& 18	\\ \
		16 		&  0.73  & 0.71   &	0.73  &	23	& 20		& 25	\\
		32 		&  0.56  & 0.71   &	0.89  &	20	& 22		& 22	\\
		64  	&  0.54  & 0.54   &	0.86  &	19	& 21		& 23	\\
		128     &  0.29  & 0.46   &	0.63  &	16	& 16		& 15	\\  \hline
		all     &  0.37  & 0.49   &	0.57  &	113	& 116		& 126	\\
		\hline
		
	\end{tabular}
\end{table}

\subsection{Comparison with other heuristics }
\label{sec:compare_all}

\begin{figure}[h]
	\centering

\includegraphics[width=0.95\linewidth]{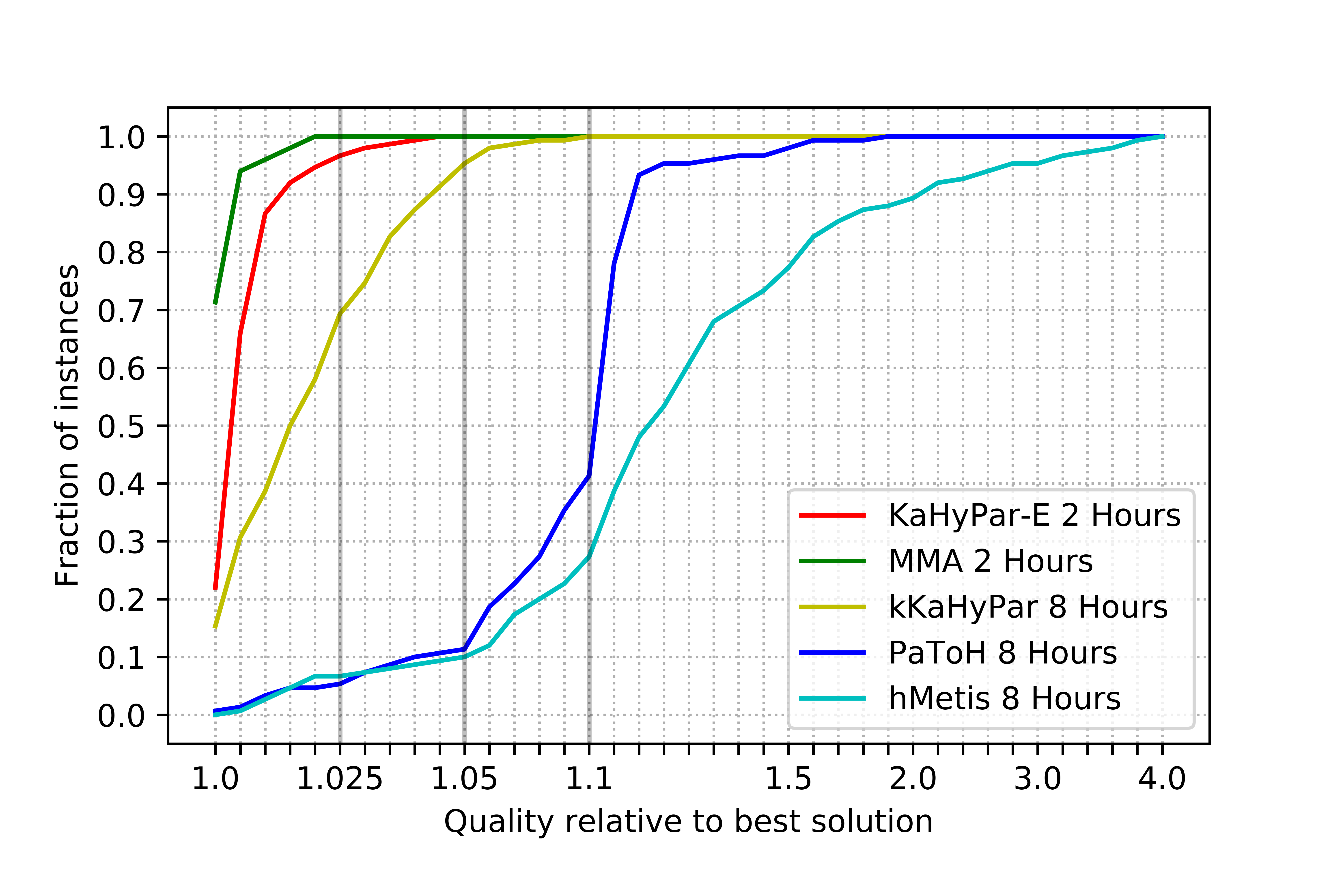}
\includegraphics[width=0.95\linewidth]{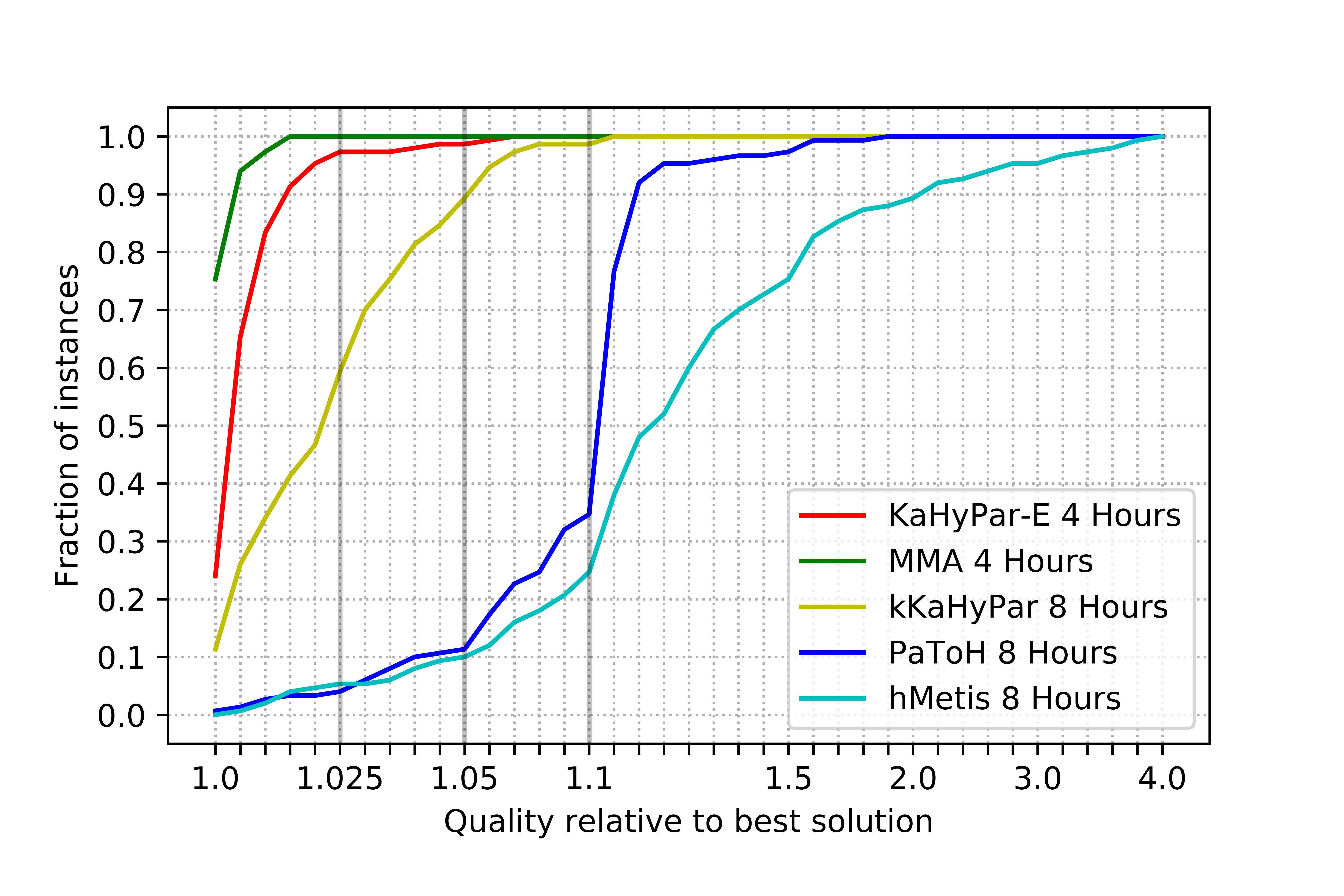}
\includegraphics[width=0.95\linewidth]{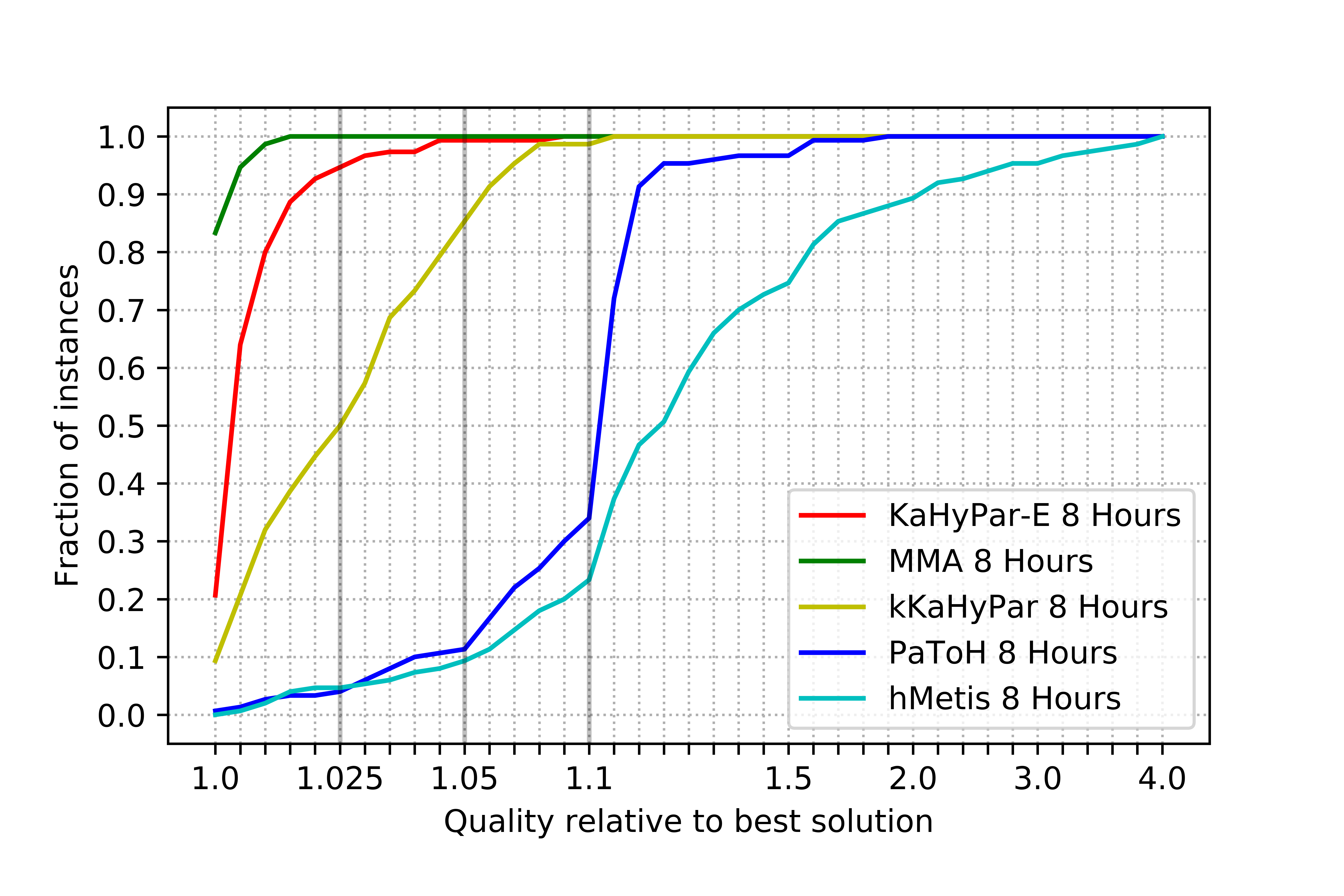}

	\caption{Performance plots comparing all algorithms}
	\label{fig:comp_others}
\end{figure}

Figure \ref{fig:comp_others} shows performance comparisons of MMA, KaHyPar-E, and $3$ other state-of-the-art multilevel hypergraph partitioners kKaHyPar, PaToH, and hMetis-R. In all plots, kKaHyPar, PaToH, and hMetis-R are run with $8$ hour time limit. MMA and KaHyPar-E are run with $2,4$, and $8$ hour time limits, respectively. MMA and KaHyPar-E outperform all non-evolutionary algorithms even when they use a quarter of the time non-evolutionary algorithms use. MMA finds a best solution in $107,113,125$ instances, and KaHyPar-E finds a best solution in $33,36,31$ instances for $2,4,8$ hours, respectively. kKaHyPar finds a best solution in $23,17,14$ instances $2,4,8$ hours, respectively. While PaToH finds a best solution for one instance, hMetis-R does not find a best solution in any of the instances. For each algorithm, the maximum performance gap with the best solution in any instance is as follows: $1\%$ for MMA, $8\%$ for KaHyPar-E, $12\%$ for KaHyPar, $82\%$ for PaToH, $284\%$ for hMetis-R. The results show that MMA is the best choice for applications where the solution quality is the most important concern.

\section{Conclusion}
\label{sec:conc}
We introduced a problem-specific greedy recombination operator and two mutation operators for the HGP problem that can be incorporated into a multilevel memetic algorithm. The greedy recombination operator combines the good genes of each parent in a novel way using a rating function that can rank subsets of nodes. We add our operators to the best performing hypergraph partitioning algorithm, KaHyPar-E, to develop a even better performing algorithm. We evaluated the performance of our algorithm in a large benchmark set and showed that our algorithm outperforms $4$ state-of-the-art algorithms. Our algorithm finds the best solution in $125$ of the $150$ instances when each algorithm is given $8$ hours. Even though the average improvement over KaHyPar-E is not large, it is statistically significant. We also showed that our algorithm with $4$ hour time limit outperforms KaHyPar-E with $8$ hour time limit. This can be partly explained by the fact that improvements for both algorithms are slow.

\clearpage
\bibliographystyle{ACM-Reference-Format}
\bibliography{references}


\end{document}